\DeclareMathOperator*{\argmax}{arg\,max}
\DeclareMathOperator*{\argmin}{arg\,min}
\newtheorem{asu}{\textbf{Assumption}}
\newcounter{subassumption}[asu]
\renewcommand{\p@subassumption}{\theasu}
\newtheorem{prop}{\textbf{Proposition}}
\newcounter{subproposition}[prop]
\renewcommand{\p@subproposition}{\theprop}
 \newtheorem{defi}{\textbf{Defination}}
\newcounter{subdefination}[defi]
\renewcommand{\p@subdefination}{\thedefi}
\algnewcommand{\LeftComment}[1]{\State \(\triangleright\) #1}
\def\BibTeX{{\rm B\kern-.05em{\sc i\kern-.025em b}\kern-.08emT\kern-.1667em\lower.7ex\hbox{E}\kern-.125emX}}
\begin{document}

\fancyhead{}

\title{Model-based Constrained MDP for Budget Allocation in Sequential Incentive Marketing}

\author{Shuai Xiao}
\authornote{Authors contributed equally to this research.}
\affiliation{%
  \institution{Ant Financial Services Group}
  \city{Shanghai}
  \country{China}}
\email{shuai.xsh@antfin.com}

\author{Le Guo}
\authornotemark[1]
\affiliation{%
  \institution{Ant Financial Services Group}
  \city{Beijing}
  \country{China}}
\email{guole.gl@antfin.com}

\author{Zaifan Jiang}
\authornotemark[1]
\affiliation{%
  \institution{Ant Financial Services Group}
  \city{Beijing}
  \country{China}}
\email{zaifan.jzf@antfin.com}

\author{Lei Lv}
\affiliation{%
  \institution{Ant Financial Services Group}
  \city{Beijing}
  \country{China}}
\email{lvlei.ll@antfin.com}
 
\author{Yuanbo Chen}
\affiliation{%
  \institution{Ant Financial Services Group}
  \city{Beijing}
  \country{China}}
\email{yuanbo.cyb@antfin.com}

\author{Jun Zhu}
\affiliation{%
  \institution{Ant Financial Services Group}
  \city{Beijing}
  \country{China}}
\email{elizhu.zj@antfin.com}

\author{Shuang Yang}
\authornote{Corresponding author.}
\affiliation{%
  \institution{Ant Financial Services Group}
  \city{San Mateo}
  \state{CA}
  \country{USA}}
\email{shuang.yang@antfin.com}

%
\renewcommand{\shortauthors}{Shuai Xiao, et al.}

%
\begin{abstract}
Sequential incentive marketing is an important approach for online businesses to acquire customers, increase loyalty and boost sales. How to effectively allocate the incentives so as to maximize the return (e.g., business objectives) under the budget constraint, however, is less studied in the literature. This problem is technically challenging due to the facts that 1) the allocation strategy has to be learned using historically logged data, which is counterfactual in nature, and 2) both the optimality and feasibility (i.e., that cost cannot exceed budget) needs to be assessed before being deployed to online systems. In this paper, we formulate the problem as a constrained Markov decision process (CMDP). To solve the CMDP problem with logged counterfactual data, we propose an efficient learning algorithm which combines bisection search and model-based planning. First, the CMDP is converted into its dual using Lagrangian relaxation, which is proved to be monotonic with respect to the dual variable. Furthermore, we show that the dual problem can be solved by policy learning, with the optimal dual variable being found efficiently via bisection search (i.e., by taking advantage of the  monotonicity). Lastly, we show that model-based planing can be used to effectively accelerate the joint optimization process without retraining the policy for every dual variable. Empirical results on synthetic and real marketing datasets confirm the effectiveness of our methods.

\end{abstract}

%
%
\begin{CCSXML}
<ccs2012>
<concept>
<concept_id>10010147.10010257.10010258.10010261</concept_id>
<concept_desc>Computing methodologies~Reinforcement learning</concept_desc>
<concept_significance>500</concept_significance>
</concept>
<concept>
<concept_id>10010147.10010257.10010258.10010261.10010272</concept_id>
<concept_desc>Computing methodologies~Sequential decision making</concept_desc>
<concept_significance>500</concept_significance>
</concept>
<concept>
<concept_id>10010405.10003550</concept_id>
<concept_desc>Applied computing~Electronic commerce</concept_desc>
<concept_significance>300</concept_significance>
</concept>
</ccs2012>
\end{CCSXML}

\ccsdesc[500]{Computing methodologies~Reinforcement learning}
\ccsdesc[500]{Computing methodologies~Sequential decision making}
\ccsdesc[300]{Applied computing~Electronic commerce}

%
\keywords{Marketing Campaign, Reinforcement Learning, Recommendation, Constrained Resource Allocation}

%


%
\maketitle

\section{Introduction}
Marketing with a form of incentives such as monetary prizes is a common approach especially in today's online internet industry. For example, in a typical online promotion, the owner of the campaign offers prizes such as coupons to encourage its target customers for certain favorable actions such as clicks or conversions. This campaign can be run only once such as in online advertising, or it can be run repetitively for multiple times throughout the lifecycle of a customer. The latter is becoming more important as companies are increasingly seeing the opportunities not only to acquire customers but also to increase their loyalty and boost sales. This problem is called sequential incentive marketing. 
Figure~\ref{fig:example} shows a real-world example where our problem emerges from. During one of market campaigning activities, Alipay repetitively sends red envelopes (coupons) with different amount of money to its users for a couple of days. Each red envelope incurs certain cost if consumed. The objective of the platform is to maximize the user engagement (e.g., the total times that people consume the red envelopes) under a global budget constraint through personalized sequential incentive allocation.

\begin{figure}[htb]
\centering
\includegraphics[width=.4\textwidth]{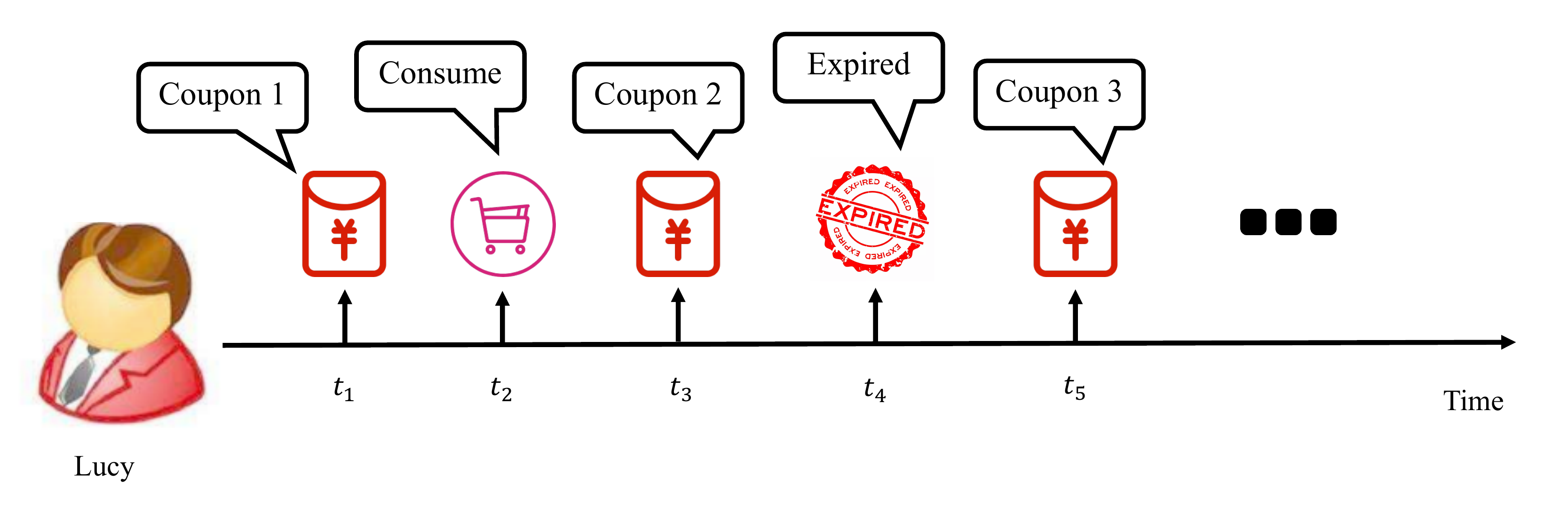}
\caption{ The illustrating example for constrained incentive allocation problem. During one of market campaigning activities, Alipay repetitively sends red envelopes with different amount of money to its users for a couple of days. Each red envelope incurs certain cost if consumed. The objective of the platform is to maximize the user engagement (e.g., the total times that users consume the red envelopes) under a global budget constraint through personalized sequential incentive allocation.}
\label{fig:example}
\end{figure}

Sequential incentive marketing poses unique technical challenges.  One of the difficulties of optimizing such problems is that only the feedback of the chosen recommendations (bandit feedback) is observed when multiple potential items for recommendation exist. Previous works~\cite{joachims2018deep, swaminathan2015batch, swaminathan2015counterfactual} have studied learning from logged bandit feedback with the help of counterfactual policy optimization without constraints. Another challenge is that both the optimality and feasibility (i.e., that cost cannot exceed budget) needs to be assessed before being deployed to online systems. In the industrial setting, the allocation strategy should be learned and verified in an off-policy manner from logged data because on-policy learning of such strategies has uncontrollable risks as the budget can't be reverted once dispensed. Therefore, vanilla on-policy algorithms are not suitable here since they rely on realtime interaction with the industrial environment to collect feedback of the current policy.
Furthermore, as the amount of data (i.e., billions of users and hundreds of items) is huge and the recommender system often include neural networks as modules, batch-training of such systems is usually necessary.

In this paper, we focus on sequential incentive recommendations with a global constraint, as is often the case in real-world industrial settings.
For this problem, users are repetitively recommended certain items from a candidate set. Each item incurs a cost if consumed by users. The objective is to maximize expected rewards such as CVR while the cost doesn't exceed the global budget. To solve this problem, we first formulate it as constrained Markov decision process (CMDP) where MDP describes the repetitive recommendation process for each user. 
This sequential incentive allocation problem with constraint has been less studied before.
Previous works~\cite{joachims2018deep, swaminathan2015batch, swaminathan2015counterfactual} solve contextual bandit problems from logged bandit feedback without constraints.  Lopez et al.~\cite{lopez2019cost} target at constrained bandit problem without considering the sequential allocation scenario where consecutive allocations have inter-dependence. Achiam et al.~\cite{achiam2017constrained} extend trust region optimization method~\cite{schulman2015trust} to solve CMDP for high-dimensional control, which falls into the category of on-policy reinforcement learning. The optimization relies on on-line data collection from the interactions between the policy to optimize and the environment, which is inapplicable in our real systems.

To solve the CMDP problem with logged off-line data, we propose an efficient learning algorithm which combines bisection search and model-based planning. Firstly, the primary CMDP is converted into its Lagrangian dual problem. The Lagrangian is formed by adding the budget constraint multiplying by a Lagrangian multiplier (also called dual variable) to the original objective function. We prove that the cost of incentive allocation decreases monotonically as the dual variable increases. Therefore the optimal dual variable for the dual problem can be identified efficiently through bisection search. In the learning process, the policy would have to be retrained for every value of the dual variable, which could be extremely time-consuming. To alleviate the heavy computational cost, model-based planning is also employed which enables one-pass policy training during the whole learning process. 
The transformation from primary CMDP to the dual problem also makes batch-training of CMDP possible, which is important as the amount of training data is very large and allocation systems often include neural networks as modules.

The primary contributions of our work are the following. 1). For real system, the allocation strategy has to be learned from logged off-line data and verified before applied to online system. We propose a novel formulation to sequential incentive allocations problem which allows for strategy learning and verifying from logged off-line data. This formulation also allows for batch-training which is vital for deep neural networks and large-scale datasets. 2). Efficient learning algorithm is devised for the proposed formulation based on theoretical findings. To accelerate to the learning process, bisection search is used based on the theoretical finding that the cost of incentive allocation is monotonic to the dual variable. 3). Model-based planning is used for policy updating so that the policy can be trained only once during the dual variable searching process.

\section{Related Work}

\subsection{Batch Learning from Bandit Feedback}

In real recommendation system, only feedback of executed actions is observed. Learning from logged feedback data, also called bandit problem, naturally belongs to counterfactual inference. It's necessary to consider the counterfactual risk when evaluating models and minimizing counterfactual risk becomes a reasonable objective because of incomplete observations. Most methods employ importance sampling-based estimators to calculate the counterfactual risk of new policies. Alekk et al.~\cite{agarwal2014taming} propose a online learning algorithm for contextual bandit problem through iterative collections of feedback from real systems. As the amount of data in industry is huge, deep neural networks are often embedded as estimation modules for recommendation system which renders the batch-learning of such systems necessary. Thorsten et al. introduce Counterfactual Risk Minimization(CRM) principle and propose a series of works~\cite{swaminathan2015counterfactual, swaminathan2015batch, swaminathan2015self} performing batch learning from logged bandit feedback. SNIPS~\cite{swaminathan2015self} is proposed to solve the propensity overfitting problem of CRM through self-normalization. Recently, they propose a deep learning based model called BanditNet~\cite{joachims2018deep}, and convert the objective into a constrained optimization problem to allow the training neural network on a large amount of bandit data using stochastic gradient descent (SGD) optimization. Lopez et al.~\cite{lopez2019cost} introduce structured reward and HSIC to alleviate data collection bias, and use binary search to find a deterministic policy which satisfies the budget constraint. All these methods don't consider sequential allocation setting and focus on contextual bandits problem where actions are independent. In this paper, we focus on sequential allocation problem where sequential actions are correlated.

\subsection{Counterfactual Policy Evaluation}
For industrial applications, evaluating new policies before applying to online systems is necessary to ensure the safety. For bandit problems or sequential decision problems where partial feedbacks are observed, counterfactual policy evaluation is usually employed to assess the expectation of newly-developed policies. Such evaluations are based importance-sampling where logged feedback data from an old policy $\pi_b$ is served as a proxy to evaluate a new policy $\pi$ \cite{precup2000eligibility} in the following equation:
\begin{align} \notag
E_{\pi} (f) = \int \pi *f(x) dx = \int \pi_{b} \frac{\pi}{\pi_{b}} *f(x) dx = E_{\pi_b{}} (\frac{\pi}{\pi_{b}}*f) 
\end{align}
where $f$ is the reward function.

To reduce the variance of vanilla importance sampling methods, Jiang et al.~\cite{jiang2016doubly} extend doubly robust (DR) estimator for contextual bandits~\cite{dudik2014doubly} to reinforcement learning. They combine value estimation with importance sampling to alleviate the problem of high variance. Thomas et al.~\cite{thomas2016data} propose two methods to reduce the variance of DR at the cost of introducing a bias. Mehrdad et al.~\cite{farajtabar2018more} reduce the evaluation variance further by directly minimizing the variance of the doubly robust estimator.

\subsection{Constrained Policy Optimization}
Sequential allocation problems with constraints are mostly formulated as the constrained Markov Decision Process(CMDP)~\cite{altman1999constrained}. Optimal policies for finite CMDP problem with known dynamics and finite states can be solved by linear programming. However, learning methods for high-dimensional control are lacking~\cite{achiam2017constrained}. 
Achiam et al.~\cite{achiam2017constrained} extend trust region optimization methods~\cite{schulman2015trust} to solve CMDP for continuous actions. Those methods rely on on-policy data collection from environment which is inapplicable in industrial settings as the policy isn't allowed to explore and learn from scratch in real system due to the unbearable cost. Di et al.~ \cite{wu2018budget} formulate budget constrained bidding as a CMDP problem and the Lagrangian multiplier are actions. They treat the constraint as parts of the environment, and search optimal Lagrangian multiplier sequences from logged data where lots of samples exist that the budget is completely consumed. This method is not applicable in the case where logged data doesn't have experience that the budget are consumed completely.

\section{Dual Method for CMDP}

\subsection{CMDP Formulation}
We formulate the sequential cost-effective incentive allocation problem with budget constaints as the Constrained Markov Decision Process (CMDP), which can be represented as a ($S, A, P, R, C, \mu, \gamma$, b) tuple:
\begin{itemize}
\item $\mathbf{S}$: The state space describing the user context, such as the user features.
\item $\mathbf{A}$: The action space containing candidate items for allocation.
\item $\mathbf{P}$: A probability distribution: $S * A \to S$, describing the dynamic transition from current state to the next one after taking action a.
\item $\mathbf{R}$: $S \times A \to \mathbb{R}$ is the reward function which maps states and actions to certain real number.
\item $\mathbf{C}$: $S \times A \to \mathbb{R}$  is the cost function corresponding to the budget consumed.
\item $\mathbf{\mu}$ is the initial state distribution for state : $S_0$.
\item $\mathbf{\gamma}  \in [0,1]$ is the discount factor for future rewards, which means how important intermediate rewards are. If $\mathbf{\gamma} =1$, then rewards are all equally important.
\item $\mathbf{b}$ is the global budget constraint, which the cost can't exceed.
\end{itemize}

To learn the CMDP problem,  logged feedback data is collected from an old behavior policy $\pi_b$ that interacted with the real system in the past. We assume $\pi_b$ is stationary for simplicity. The logged data $D$ are lists of tuples with six elements, consisting of observed state $s_i$ , action $a_i \sim \pi_b(* | s_i)$, the propensity  $p_i$ defined as $\pi_b(a_i | s_i)$,  the observed reward $r_i$,  the cost $c_i$ and the next state $s_{i+1}$.

\begin{equation}\notag
D=[(s_0, a_0, p_0, r_0, c_0, s_0), ..., (s_n, a_n, p_n, r_n, c_n, s_{n+1})]
\end{equation}

The goal of learning the CMDP problem is to find a policy $\pi(a|s)$ from $D$ that maximizes an objective function, $J(\pi)$, which is usually a cumulative discounted reward, $J(\pi) = E_{\tau \sim \pi} [ \sum_{t=0}^T \gamma^t R(s_t, a_t)]$ while the cost $J_{C}(\pi)$ doesn't exceed the budget constraint $b$ where $J_{C}(\pi) = E_{\tau \sim \pi} [ \sum_{t=0}^T \gamma^t C(s_t, a_t)]$. Here $\tau \sim \pi$  is shorthand for indicating that the distribution over trajectories depends on $\pi$: $s_0 \sim \mu, a_t \sim \pi, s_{t+1} \sim P$. To summarize, the CMDP can be formulated as the following equation:

\begin{equation}\label{origCMDP}
\begin{aligned}
p^* := \argmax_\pi &\quad J(\pi) \\
s.t.&\quad  J_C(\pi) \le b
\end{aligned}
\end{equation}

or equivalent equation:

\begin{equation}\label{CMDP}
\begin{aligned}
p^* :=  \argmin_\pi &\quad -J(\pi) \\
s.t.&\quad  J_C(\pi) \le b
\end{aligned}
\end{equation}

\subsection{Lagrangian Dual problem for CMDP}
To efficiently solve the CMDP problem in Equation~\ref{CMDP}, we firstly convert the primary problem into a Lagrangian dual problem by adding the constraint term to the original objective function to form the Lagrangian of CMDP as follows:
\begin{equation}\label{Lagrangian} \notag
L(\pi, \lambda) = -J(\pi) + \lambda (J_C(\pi) - b) 
\end{equation}
where $\lambda$ is the Lagrangian multiplier.

Then the Lagrangian dual problem of CMDP can be formulated as the follows:
\begin{equation}\label{dual problem}
\begin{aligned}
d^* = \max_\lambda&\quad g(\lambda)\\
s.t.&\quad \lambda \ge 0
\end{aligned}
\end{equation}
where $g(\lambda) = \min_{\pi} -J(\pi) + \lambda (J_C(\pi) - b)$ is called Lagrangian dual function. Here we refer $\pi$ as the primary variable and $\lambda$ as the dual variable.  For any $\lambda\ge 0$ and feasible $\pi$, $p^* \ge g(\lambda)$ always holds which means that  $g(\lambda)$ is a lower bound of $p^*$.

In the following, we first prove that the solution of the Lagrangian dual problem of CMDP in Equation~\ref{dual problem} exists.
Then the learning algorithm of the Lagrangian dual problem of CMDP is given.
Before proceeding into the proof of existence of the Lagrangian dual problem, we need to give basic definitions and assumptions that our proof relies on.
\begin{defi}\label{lowbound}
$\pi_l(s) = \argmax_a C(s, a)$: The lowest-cost policy always chooses actions with minimum cost at every state. $\pi_h(s) = \argmax_a C(s, a)$: The highest-cost policy always chooses actions with maximum cost at every state.
\end{defi}

\begin{asu}\label{strict feasibility}
Budget constraint under the lowest-cost policy $\pi_l$ is strictly feasible: $J_C(\pi_l) < b$. 
\end{asu}

\begin{asu}\label{violate policy}
The cost exceeds the budget constraint under the highest-cost policy $\pi_h$: $J_C(\pi_h) > b$.
\end{asu}

\begin{prop} [Existence of Solutions] \label{positive lambda}
When constraint satisfies assumption~\ref{strict feasibility} and~\ref{violate policy}, then the optimal value for dual variable $\lambda$ in Equation~\ref{dual problem} always exists and is positive, $\lambda^* > 0$.
\end{prop}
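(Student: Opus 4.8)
The plan is to treat the dual function $g(\lambda)=\min_{\pi}\,[-J(\pi)+\lambda(J_C(\pi)-b)]$ from Equation~\ref{dual problem} as a one-dimensional concave function on $[0,\infty)$ and to split the proposition into two claims: that the supremum defining $d^\ast$ is attained at a finite $\lambda^\ast$, and that this maximizer cannot be $0$. The first thing I would record is concavity: for each fixed policy $\pi$ the map $\lambda\mapsto -J(\pi)+\lambda(J_C(\pi)-b)$ is affine, and $g$ is the pointwise infimum of this family, hence concave and upper semicontinuous in $\lambda$. This structural fact is what makes both remaining steps routine once the two assumptions are brought in.

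For existence I would use Assumption~\ref{strict feasibility} to control the behaviour of $g$ at infinity. Since $g$ is a minimum over \emph{all} policies, substituting the lowest-cost policy $\pi_l$ of Definition~\ref{lowbound} gives the upper bound $g(\lambda)\le -J(\pi_l)+\lambda\,(J_C(\pi_l)-b)$. By Assumption~\ref{strict feasibility} the slope $J_C(\pi_l)-b$ is strictly negative, so this affine upper bound, and therefore $g(\lambda)$ itself, tends to $-\infty$ as $\lambda\to\infty$. Because rewards and costs are bounded over the finite discounted horizon, $g(0)=-\max_\pi J(\pi)$ is finite; a finite concave function on $[0,\infty)$ that diverges to $-\infty$ has compact superlevel sets, so its maximum is attained at some finite $\lambda^\ast\in[0,\infty)$.

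For positivity I would examine the right derivative of $g$ at the boundary point $\lambda=0$. Writing $\pi^\ast_0\in\argmax_\pi J(\pi)$ for a minimizer at $\lambda=0$ (the unconstrained reward-optimal policy), the slope of the active affine piece is the constraint slack $J_C(\pi^\ast_0)-b$, and for a concave infimum of affine functions the right derivative is the smallest such slope over active minimizers, $g'_+(0)=\min\{J_C(\pi)-b:\pi\in\argmax_\pi J(\pi)\}$. If I can show this quantity is strictly positive, then $g$ is strictly increasing at $0$, so the boundary cannot be optimal and $\lambda^\ast>0$ follows. The bridge to Assumption~\ref{violate policy} is that in the incentive setting a larger incentive produces both higher reward and higher cost, so the reward-maximizing action at every state is the maximum-cost action; hence $\pi^\ast_0$ coincides with the highest-cost policy $\pi_h$ of Definition~\ref{lowbound}, giving $J_C(\pi^\ast_0)=J_C(\pi_h)>b$ by Assumption~\ref{violate policy}.

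I expect the positivity step to be the main obstacle, for two reasons. First, the supergradient identity $g'_+(0)=J_C(\pi^\ast_0)-b$ presupposes that the inner minimum is attained and must be handled carefully when the reward-optimal policy is non-unique, in which case \emph{every} reward maximizer must be shown infeasible rather than just one. Second, and more delicately, reducing "the reward-optimal policy is infeasible'' to Assumption~\ref{violate policy} relies on the monotone coupling between reward and cost that identifies $\pi^\ast_0$ with $\pi_h$; making this rigorous, rather than appealing to $\pi_h$ informally, is the subtle part. By contrast, the existence half is a standard coercivity-plus-concavity argument and should go through cleanly.
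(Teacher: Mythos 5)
Your proposal is correct in outline but takes a genuinely different, and far more rigorous, route than the paper. The paper's own proof is essentially a verbal assertion: it notes that Assumption~\ref{strict feasibility} guarantees a strictly feasible policy, declares it ``obvious'' from Assumption~\ref{violate policy} that a positive optimal multiplier exists, and observes that if $\pi_h$ were feasible the problem would degrade to an unconstrained MDP with $\lambda^*=0$. You instead decompose the claim into attainment (coercivity of the concave dual function $g$, obtained by majorizing $g(\lambda)$ by the affine function with strictly negative slope $J_C(\pi_l)-b$) and positivity (a supergradient argument at $\lambda=0$). The attainment half is clean and is something the paper never actually addresses, so your approach buys a genuine proof of existence where the paper has none.

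On the positivity half you have correctly located the real obstruction, and it is worth saying explicitly that it is a gap in the proposition's stated hypotheses, not merely a loose end in your argument. Assumptions~\ref{strict feasibility} and~\ref{violate policy} alone do not imply $\lambda^*>0$: if reward and cost are not monotonically coupled, the unconstrained reward-maximizer $\pi_0^*$ can be strictly feasible, $J_C(\pi_0^*)<b$, even while the (possibly low-reward) highest-cost policy $\pi_h$ of Definition~\ref{lowbound} violates the budget; then the constraint is inactive at the optimum and $\lambda^*=0$. Your bridge --- identifying $\pi_0^*$ with $\pi_h$ --- is exactly Assumption~\ref{monotonicity}, which the paper states only in the appendix and does not list among the hypotheses of this proposition (although the experimental section tacitly invokes ``the three assumptions''). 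To close your proof you must either add Assumption~\ref{monotonicity} as a hypothesis, after which the state-wise monotone coupling makes every reward maximizer a cost maximizer, hence infeasible, and $g'_+(0)=\min\{J_C(\pi)-b:\pi\in\argmax_\pi J(\pi)\}>0$ as you intend, or else weaken the conclusion to $\lambda^*\ge 0$. The paper's proof silently makes the same identification when it asserts that a positive multiplier is needed ``to penalize the budget constraint violation.''
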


\begin{proof}
Under the assumption~\ref{strict feasibility}, a policy that doesn't exceed the budget exists and therefore the primary and dual problem always have a strictly feasible solution. Under the assumption~\ref{violate policy}, it's obvious that there always exists a positive optimal Lagrangian multiplier $\lambda$ which penalizes the budget constraint violation. If the cost of high-cost policy doesn't exceed the budget, then the CMDP problem degrades into MDP without constraints and the optimal dual variable is always zero. In conclusion, based on the two natural assumptions which hold in practical problems, we have the above proposition.
\end{proof}

\subsection{Solving the Dual Problem}
After proving the existence of solutions of the primary and dual problem, we iteratively improve the lower bound by optimizing the dual problem instead.
To solve the dual problem in equation~\ref{dual problem}, dual ascent method~\cite{boyd2004convex} can be used, where primary variable and dual variable can be optimized alternatively via gradient ascent. In practice, the computational cost of dual ascent method is extremely high as for each dual variable $\lambda$, we need to learn the optimal policy $\pi$ and for each $\pi$, we need to run counterfactual policy evaluation (CPE) to compute the sub-gradient of $\lambda$.
To reduce the heavy computational cost, we develop an novel algorithm which can efficiently identify optimal dual variable $\lambda$ based on theoretical deductions. We first derive that the cost of incentive allocation is monotonic to the Lagrangian multiplier $\lambda$. Then a faster bisection method is proposed to determine the optimal dual variable $\lambda^*$.

\begin{theorem} \label{monotonic budget}
Let Lagrangian 
$
L(\pi, \lambda) = -J(\pi) + \lambda(J_C(\pi) - b)
$, and 
$
\pi_\lambda=\argmax_{\pi} L(\pi, \lambda)
$.
 If $\lambda_a > \lambda_b$, then $J_C(\pi_{\lambda_a}) \leq J_C(\pi_{\lambda_b})$. That's to say, $J_C(\pi)$ is monotonic with $\lambda$.
\end{theorem}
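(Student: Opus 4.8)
The plan is to use the classic exchange (four-point) argument for Lagrangian relaxations, which converts the defining optimality of $\pi_\lambda$ into a pair of inequalities whose sum collapses directly to the claimed monotonicity, with no appeal to convexity or differentiability of $J$ and $J_C$ in $\pi$. First I would fix two multipliers $\lambda_a > \lambda_b \ge 0$ and abbreviate $\pi_a := \pi_{\lambda_a}$ and $\pi_b := \pi_{\lambda_b}$. To stay consistent with the dual function $g(\lambda) = \min_\pi L(\pi,\lambda)$ from Equation~\ref{dual problem}, I would read $\pi_\lambda$ as the \emph{minimizer} of $L(\pi,\lambda)$ (the ``$\argmax$'' in the statement should be ``$\argmin$''; this sign convention is precisely what makes the cost \emph{decrease} in $\lambda$).

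Second, I would record the two optimality conditions. Since $\pi_a$ minimizes $L(\cdot,\lambda_a)$ and $\pi_b$ minimizes $L(\cdot,\lambda_b)$, evaluating each objective at the competing policy gives
\begin{align*}
-J(\pi_a) + \lambda_a\big(J_C(\pi_a)-b\big) &\le -J(\pi_b) + \lambda_a\big(J_C(\pi_b)-b\big),\\
-J(\pi_b) + \lambda_b\big(J_C(\pi_b)-b\big) &\le -J(\pi_a) + \lambda_b\big(J_C(\pi_a)-b\big).
\end{align*}

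Third comes the heart of the argument. Adding the two inequalities cancels the reward terms $-J(\pi_a)$ and $-J(\pi_b)$ as well as the constant offsets $-\lambda_a b$ and $-\lambda_b b$ from both sides, leaving only the cost contributions. Collecting these and rearranging, everything factors as
\[
(\lambda_a - \lambda_b)\big(J_C(\pi_a) - J_C(\pi_b)\big) \le 0.
\]
Because $\lambda_a - \lambda_b > 0$ by assumption, the second factor must be nonpositive, i.e. $J_C(\pi_{\lambda_a}) \le J_C(\pi_{\lambda_b})$, which is exactly the asserted monotonicity and is what justifies the later bisection search on $\lambda$.

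I expect the main obstacle to be bookkeeping rather than depth: one must keep the direction of the optimality inequalities aligned with the optimization sense, since reading $\pi_\lambda$ as a maximizer of $L$ would flip the final inequality and reverse the conclusion. The only other point worth a remark is existence and selection of the minimizer: if $L(\cdot,\lambda)$ has multiple minimizers, the statement should be understood for arbitrary selections $\pi_\lambda$, and the identical summation argument still yields the inequality for any such choices. No continuity, differentiability, or convexity of $J$ or $J_C$ is required, which is what makes the result hold in the general, model-based setting used here.
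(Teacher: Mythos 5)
Your proposal is correct and follows essentially the same four-point exchange argument as the paper: write the two optimality inequalities for $\pi_{\lambda_a}$ and $\pi_{\lambda_b}$, add them, cancel the $J$ and $b$ terms, and conclude from $(\lambda_a-\lambda_b)\bigl(J_C(\pi_{\lambda_a})-J_C(\pi_{\lambda_b})\bigr)\le 0$. Your remark that $\pi_\lambda$ must be read as the \emph{minimizer} of $L(\pi,\lambda)$ (despite the $\argmax$ in the statement) matches the paper's own proof, which explicitly treats $\pi_{\lambda_a},\pi_{\lambda_b}$ as minimizers.
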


\begin{proof}
Because $\pi_{\lambda_a}$ and $\pi_{\lambda_b}$ is the minimizer of Lagrangian dual function, therefore
\begin{equation} \notag
-J(\pi_{\lambda_a}) + \lambda_a(J_C(\pi_{\lambda_a})-b) \leq -J(\pi_{\lambda_b})+\lambda_a(J_C(\pi_{\lambda_b})-b)
\end{equation}
\begin{equation} \notag
-J(\pi_{\lambda_b}) + \lambda_b(J_C(\pi_{\lambda_b})-b) \leq -J(\pi_{\lambda_a})+\lambda_b(J_C(\pi_{\lambda_a})-b)
\end{equation}
Adding the two inequalities
$$
\begin{aligned}
&-J(\pi_{\lambda_a}) - J(\pi_{\lambda_b}) + \lambda_a(J_C(\pi_{\lambda_a})-b)+\lambda_b(J_C(\pi_{\lambda_b})-b)\\
\leq & -J(\pi_{\lambda_b}) - J(\pi_{\lambda_a}) +\lambda_a(J_C(\pi_{\lambda_b})-b) + \lambda_b(J_C(\pi_{\lambda_a}) -b)\\
&\Downarrow\\
&(\lambda_a-\lambda_b)(J_C(\pi_{\lambda_a}) - J_C(\pi_{\lambda_b}))\leq 0\\
\end{aligned}
$$
because $\lambda_a > \lambda_b$, which leads to the following conclusion:
$$
\begin{aligned}
&J_C(\pi_{\lambda_a}) \leq J_C(\pi_{\lambda_b})
\end{aligned}
$$
Proof finishes.
\end{proof}

With the result that the cost of incentive allocation is monotonic with dual variable $\lambda$, the dual problem can be solved much faster than dual ascent method~\cite{boyd2004convex}. We firstly design a mechanism to identify optimal dual variable based on bi-section search, which empirically converges faster than dual ascent method, consequently reducing computational time greatly. The dual ascent usually has near-linear convergence rate while the bisection search can have exponential convergence rate. 
We firstly compute the lower bound $\lambda_{l}$ and upper bound $\lambda_{u}$ of dual variable $\lambda$. Then we learn the policy $\pi_{\lambda_m}$ from logged off-line data by fixing the dual variable $\lambda_m$ where  $\lambda_m = \frac{\lambda_{l} + \lambda_{u}}{2}$ is the middle point lying between lower and upper bound. Then the pseudo sub-gradient of $\lambda$ at the middle point $\lambda_m $ is obtained through running counterfactual policy evaluation (CPE) over logged data to compute $J_C - b$, which indicates the direction of next move for dual vaiable $\lambda$. In summary, the details of dual problem learning are given in Algorithm~\ref{dual method cmdp}.

\begin{algorithm}[h]
\caption{Dual Method Learning Framework for Constrained Markov Decision Processes}
\label{dual method cmdp}
\begin{algorithmic}[1]

\Require Initial policy $\pi_0$, logged training data $D_{train}$,  logged evaluation data $D_{val}$. 
\Require Lower bound $\lambda_l$ and upped bound $\lambda_u$ of dual variable $\lambda$.
\Require Policy learning algorithm $PL$, and Counterfactual policy evaluation algorithm $CPE$.

\While {$\lambda_{l} < \lambda_{u}$}
   \State Identify middle point of dual variable $\lambda_m =\frac{\lambda_{l} + \lambda_{u}}{2}$
   \State Learning policy $\pi_{\lambda_m}$ using algorithm $PL$ from logged data $D_{train}$ with fixed $\lambda_m$.
   \State Run $CPE$ to get counterfactual cost $J_C$ = $CPE$ ( $D_{val}$,  $\pi_{\lambda_m}$ ).
   \State Compute pseudo sub-gradient of $\lambda$ to determine the search direction.
    \If { $J_C \le b$ and $|J_C - b| < \delta$  }
	\State break
    \EndIf
    \If {$J_C < b$}
    	\State $\lambda_{l} = \lambda_m$
    \Else
     	\State $\lambda_{u} = \lambda_m$
    \EndIf
\EndWhile
\State \Return Optimal dual variable $\lambda_m$ and policy $\pi_{\lambda_m}$.

\end{algorithmic}
\end{algorithm}

The policy learning $PG$ in Algorithm~\ref{dual method cmdp} will be introduced in detail in Section~\ref{sec:pg}. To circumvent retraining the policy for each dual variable $\lambda$, a model-based planning algorithm is proposed to accelerate the learning process in Section~\ref{sec:model-based}.
To make the main idea of the paper coherent, we move the details of counterfactual policy evaluation (CPE) and identification of upper bound $\lambda_u$ of dual variable into Appendix~\ref{sec:CPE} and~\ref{proof lambda upper bound}.

\section{Policy Learning with Dual Variable} 
For dual method learning in Algorithm~\ref{dual method cmdp}, we need to update policy for each $\lambda$. In this section, we firstly derive a variant of DQN as policy learning methods and then add entropy regularizer to improve both exploration and robustness in challenging decision-making tasks~\cite{geibel2005risk}. 
For the above approach, the policy has to be re-trained for each each $\lambda$, which is very time-consuming. To alleviate this problem, a model-based approach is firstly proposed to accelerate the training process. We'll give the details of policy learning and model-based acceleration in the following.

\subsection{Policy Learning} \label{sec:pg}
As in our case that the cost is incurred for each action taken which means that the cost and reward have the same distribution depending on policy $\pi$, so the cost can be subsumed into the reward as shown in the following equation.
\begin{equation}\label{new reward} 
\begin{aligned}
L(\pi, \lambda)&=-J(\pi) + \lambda (J_C(\pi))\\
&= -\mathbb{E}_{\tau \sim \pi(\tau)}[r(\tau)] + \lambda \big( \mathbb{E}_{\tau\sim \pi(\tau)}[c(\tau)] - b \big)\\
&=-\mathbb{E}_{\tau\sim\pi(\tau)} \Bigg[ \sum_{i=1}^T(\underbrace{r(s_t,a_t)-\lambda c(s_t,a_t)}_{\text{new reward}}) \Bigg] -\lambda b
\end{aligned}
\end{equation}

As a result, we can use the new reward  $rc(s, a)=r(s,a) - \lambda c(s,a)$ to learn a DQN policy~\cite{mnih2015human, wang2015dueling} given $\lambda$. One benefit of this reformulation is that the new reward has reasonable interpretation that the policy taking actions with high rewards and low costs is desired. Another benefit is that this reformulation makes the batch-learning of DQN easier as the cost term disappears so we can sampling a batch of samples $\{(s_i, a_i, p_i, r_i, c_i, s_{i+1})\}$ for training as done in traditional DQN.


To improve both exploration and robustness in challenging decision-making tasks~\cite{geibel2005risk}, we can also add entropy regularizer to the above reward-reshaped DQN. The entropy regularizer makes the policy to take actions as diverse as possible. In this way, we get a more stochastic DQN, which falls into the category of Soft-Q based algorithms~\cite{haarnoja2017reinforcement, schulman2017equivalence}. The objective function after adding an entropy regularizer to the policy becomes:
\begin{equation}\label{soft q}
\begin{aligned}
J_{\rho}(\pi)&\quad:= \mathbb{E}_{\tau\sim\pi(\tau)}[rc(\tau) - \rho g(\tau)]\\
\text{where}&\quad rc(\tau)=r(\tau) - \lambda c(\tau)\\
&\quad g(\tau) = \sum_{i=0}^T\gamma^i\log\pi(a_i|s_i)
\end{aligned}
\end{equation}

The optimal policy for entropy-regularized policy in equation~\ref{soft q} can be derived in a similar way to PCL~\cite{nachum2017bridging} and is given directly as bellow:
\begin{equation}\label{equation:soft-q} 
\begin{aligned}
Q^*(s,a)&=rc(s,a) + \gamma V^*(s')\\
V^*(s)&=\rho\log\sum_a\exp\{\frac{Q^*(s,a)}{\rho}\}\\
\pi^*(a|s)&=\exp\{\frac{Q^*(s,a) - V^*(s)}{\rho}\}\\
\end{aligned}
\end{equation}
From above equation, if $\rho$ is close to zero, then the stochastic DQN degrades into original DQN without entropy regularizer, where the action with the largest Q value is taken. If $\rho$ is infinitely large, then all actions have the same probability to be taken. The details of policy learning are summarized in Algorithm~\ref{update policy q-learning}.

\begin{algorithm}[h]
\caption{Policy Learning with Dual Variable}
\label{update policy q-learning}
\begin{algorithmic}[1]
\Require Dual variable $\lambda$, logged training data $D_{train}$. 

\State Calculate reshaped reward $rc(s, a)=r(s,a) - \lambda c(s,a)$.
\State Policy learning using batch SGD with loss~\ref{new reward} or ~\ref{soft q}.
\State Prob(s,a) = $\pi_{\lambda} (s,a) $ during inference. \Comment{Only used in model inference}
\State \Return Optimal policy $\pi_{\lambda}$

\end{algorithmic}
\end{algorithm}

\subsection{Model-based Acceleration} \label{sec:model-based}
Even using bi-section search for $\lambda$, it's very time-consuming for re-training policy for multiple times. To alleviate the re-training problem, we can turn to the idea of model predictive control (MPC). If we have the dynamic model of the environment, we can use tree search based planning algorithm by expanding the full search tree from the current state to derive the optimal policy. To this end, we first learn the environment model from logged data and then derive optimal policy through tree-search based model planning.

Firstly, the cost and reward model, and state transition model are learned from logged data:
\begin{equation}\label{dynamic model} \notag
\begin{aligned}
\text{Transition model:}\quad\quad& T(s_t, a_t)  \rightarrow s_{t+1}\\
\text{Reward model:}\quad\quad& R(s_t, a_t) \rightarrow r_{t}\\
\text{Cost model:}\quad\quad& C(s_t, a_t) \rightarrow c_{t}
\end{aligned}
\end{equation}

Lots of ways exist to model transition dynamics, like neural network~\cite{nagabandi2018neural}, gaussian processes~\cite{deisenroth2011pilco} and latent dynamics ~\cite{watter2015embed, hafner2018learning, ha2018world}. Here we use a feed-forward deep neural network with a regression loss to learn the dynamics. The reward model,  typically a CTR prediction problem, can be learned with logistic regression~\cite{mcmahan2013ad}, logistic regression with GBDT~\cite{he2014practical} or Wide\&Deep model~\cite{cheng2016wide}.  In our paper, Wide\&Deep model is used as its flexibility suits our high-dimensional user feature. The cost model is learned with Wide\&Deep model with a regression loss.

With the learned model, we can perform model predictive control (MPC) to obtain policy $\pi$. In this work, we simply traverse the search tree over a finite horizon $H$ to calculate the Q values using the learned dynamics. Note that for higher dimensional action spaces and longer horizons, Monte Carlo sampling may be insufficient, which we leave for future work. The overall procedure of model-based approach is presented in Algorithm~\ref{update policy model-based planning}.

\begin{algorithm}[h]
\caption{Policy Learning with Dual Variable using Model-based Planning Acceleration}
\label{update policy model-based planning}
\begin{algorithmic}[1]

\Require Dual variable $\lambda$, entropy regularizer coefficient $\rho$, planning horizon $h$.
\Require Transition model $T(s_t, a_t)$, reward model $R(s_t, a_t)$, cost model $C(s_t, a_t)$.

\State Set $Q(s, a) = 0$ for any s, a.
\For { i = 1 ... N}
        \State Traverse local tree with depth $h$ from state $s$ using learned dynamic $T$.
	\State Calculate $Q(s, a), V(s)$ using equation~\ref{equation:soft-q}.
	\State Calculate the policy $\pi^*(a|s) =\exp\{\frac{Q^*(s,a) - V^*(s)}{\rho}\} $
\EndFor
\State
\Return Policy $\pi^*$

\end{algorithmic}
\end{algorithm}

This combination of learned model dynamics model and model predictive planning is beneficial in that the model is trained only once. By simply changing the $\lambda$, we can accomplish a variety of policy validation effectively, without a need for $\lambda$-specific retraining.

\renewcommand{\thefigure}{\arabic{figure}}
\renewcommand{\thesubfigure}{\alph{subfigure}}

\begin{figure*}[htb]
\centering
\includegraphics[width=.8\linewidth]{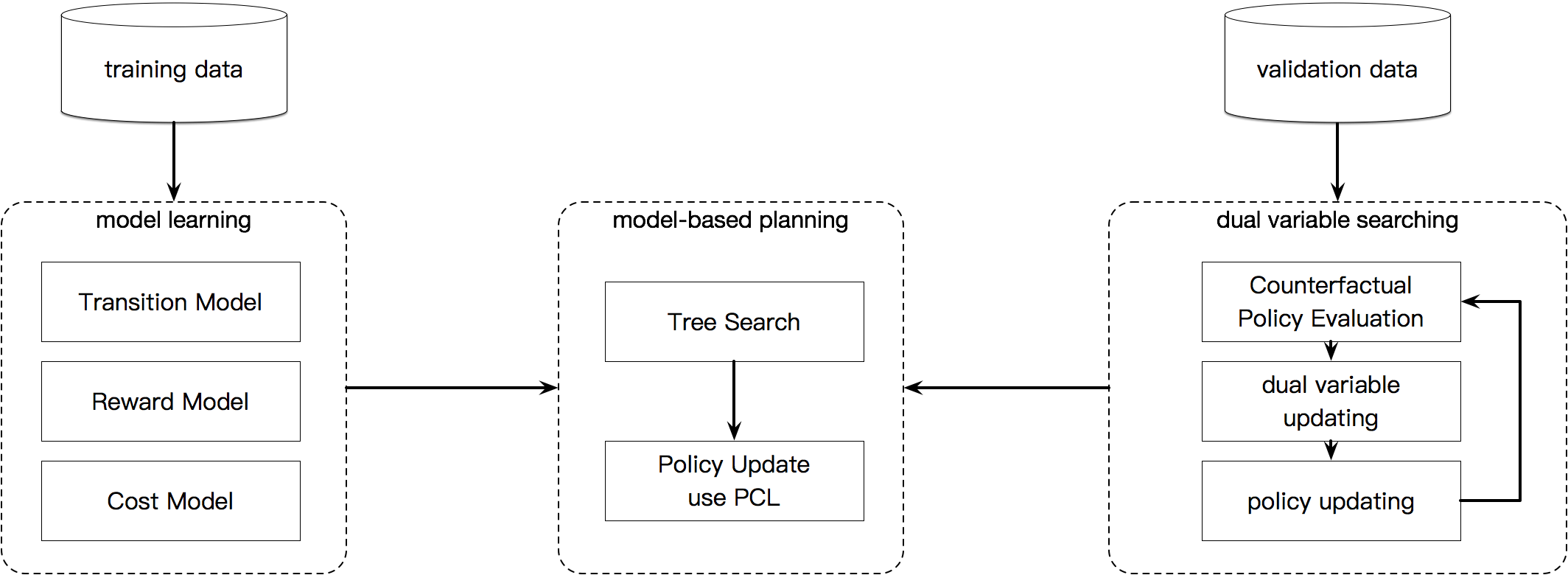}
\caption{The workflow of the proposed model-based acceleration. The main process shown in the right column is a closed loop for using dual ascent to solve Lagrangian dual problem for constrained MDP which contains dual variable identification and policy learning. The sub-gradient of the Lagrangian multiplier (dual variable) is calculated using counterfactual policy evaluation. The middle column corresponds to model-based acceleration and the right column shows the data flow for learning environment model.}
\label{fig:framework}
\end{figure*}

The overall workflow of the model-based acceleration is shown the Figure~\ref{fig:framework}. The main process shown in the right column is a closed loop for using dual ascent to solve Lagrangian dual problem for constrained MDP which contains dual variable identification and policy learning. The sub-gradient of the Lagrangian multiplier (dual variable) is calculated using counterfactual policy evaluation. The middle column corresponds to model-based acceleration and the right column shows the data flow for learning environment model.

\section{Experimental results}
We conduct experiments on synthetic and real-world data. To testify the soundness of the proposed method, synthetic data which satisfies the three assumptions is simulated. For real-world data, its characteristics are firstly analyzed to check the validation of the assumptions and then the performances of the proposed method are demonstrated.

\subsection{Baselines}
To show the benefits of the proposed approach, several alternatives are used to demonstrate the advantages and disadvantages of different approaches, which are briefly introduced as follows:
\begin{itemize}
\item{Random:} Randomness is a simple and effective strategy to allocate incentives to customers when recommendation system works with a cold start. The policy allocate all items to customers at an uniform probability.  
\item{Constrained contextual bandit:} Constrained contextual bandit constitutes one-step Constrained MDP where correlations between steps are not considered.
\item{Constrained MDP:} Constrained Markov decision process is solved with the proposed fast bisection search with and without model-based acceleration.
\end{itemize}

\subsection{Synthetic data}

\noindent\textbf{Dataset Description}  We generate a two-step sequential contextual bandit dataset and actions are selected with a uniform distribution. All states in the first step are 0, and next states are $i+1$ when taking action $i, i=0,...n$. The reward for state $s$ action $i$ is sampled from a gaussian distribution with mean reward $r[s, i]$ and variance $v$. The cost for state $s$ and action $i$ is sampled from a gaussian distribution with mean cost $c[s, i]$ and variance $0.1$. We set $r[s, i] = r[0, s] + (i - s) * \beta_r$, $c[s, i] = r[0, s] + (i - s) * \beta_c$. Therefore, rewards $r[s, 0] ... r[s, n]$ and costs $c[s, 0] ... c[s, n]$ are in ascending order.

\noindent\textbf{Result} For the model-based approach, the reward and cost model should be learned at first. The true average reward and the predicted reward for hold-out validation data from the reward model are shown in Figure~\ref{fig:reward_accuracy}. The reward model can predict the rewards for different users accurately with an acceptable error. The true average cost and the predicted cost for hold-out validation data from the cost model are shown in Figure~\ref{fig:cost_accuracy}. The prediction accuracy for costs is very high. Note that there is a trade-off between speeding up of algorithm through model-based planning and the performance of the dual CMDP problem. If the prediction accuracy of model-based planning doesn't reach the acceptable criterion, we can roll back to use the original model without model-based acceleration.

\renewcommand{\thefigure}{\arabic{figure}}
\renewcommand{\thesubfigure}{\alph{subfigure}}

\begin{figure}[htb!]
\centering
\begin{subfigure}{.4\textwidth}
\includegraphics[width=.8\linewidth]{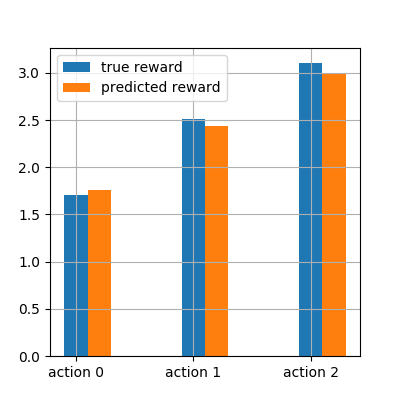}
\caption{The prediction accuracy for the reward model. Average ground-truth and predicted reward over the holdout validation dataset are shown.}
\label{fig:reward_accuracy}
\end{subfigure}
\begin{subfigure}{.4\textwidth}
\includegraphics[width=.8\linewidth]{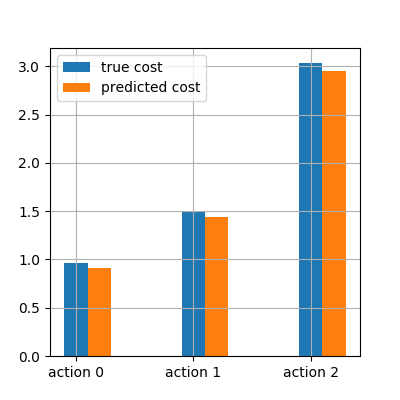}
\caption{The prediction accuracy for the cost model. Average ground-truth and predicted cost over the holdout validation dataset are shown.}
\label{fig:cost_accuracy}
\end{subfigure}
\caption{The prediction accuracy of the reward and cost model in the model-based approach.}
\end{figure}

For constrained MDP, we compare the performance and computational complexity for the proposed fast bisection search and model-based acceleration learning algorithms. The performances of the two learning algorithms are shown in Figure~\ref{model_free}. The two learning algorithms produce almost the same rewards under different budgets while the model-based acceleration doesn't need to re-train the policy for each $\lambda$.  The computational cost of bi-section search is $O(n)$ times as much as model-based acceleration where n is the search times for $\lambda$. Except for explicitly stated, the performance for constrained MDP is for the model-based acceleration in the later discussion.

\begin{figure}[htb]
\centering
\includegraphics[width=.4\textwidth]{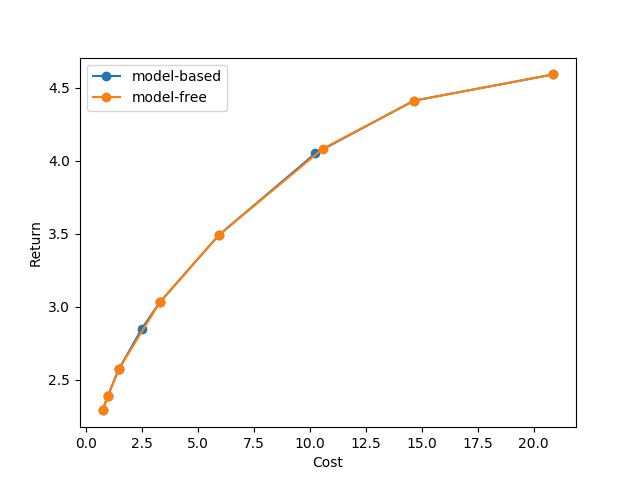}
\caption{ The performance of constrained MDP solved with the proposed fast bisection search and model-based acceleration learning algorithms. The "model-free" represents the bisection search and "model-based" represents model-based acceleration.}
\label{model_free}
\end{figure}

\begin{figure}[t!]
\centering
\begin{subfigure}[t]{.4\textwidth}
\includegraphics[width=.8\linewidth]{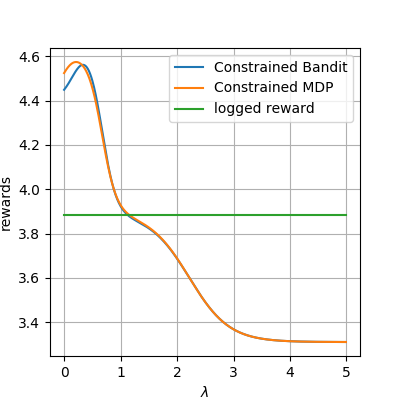}
\caption{Reward curves of different approaches regarding to $\lambda$. Rewards are computed over the holdout validation dataset using the models learned under different $\lambda$.}
\label{fig:reward_curve}
\end{subfigure}
\begin{subfigure}[t]{.4\textwidth}
\includegraphics[width=.8\linewidth]{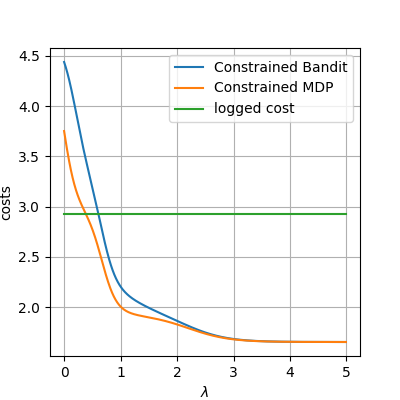}
\caption{Cost curves of different approaches regarding to $\lambda$. Costs are computed over the holdout validation dataset using the models learned under different $\lambda$.}
\label{fig:cost_curve}
\end{subfigure}
\caption{The performances of constrained contextual bandit and constrained MDP approaches.}
\label{fig:performance}
\end{figure}

The performances of constrained contextual bandit and constrained MDP approaches are shown in Figure~\ref{fig:performance}. The rewards and costs are computed over the holdout validation dataset using the models learned under different $\lambda$. The cost curve for the dual problem of constrained MDP in Figure~\ref{fig:cost_curve} decreases monotonically as the $\lambda$ increases. This fact empirically verifies the Theorem~\ref{monotonic budget}. The reward curves for the dual problem of constrained MDP in Figure~\ref{fig:reward_curve} almost decreases with $\lambda$. This is because that the costs decrease with $\lambda$, so the rewards decrease together with the costs as stated in Assumption~\ref{monotonicity}. As we can see, the reward curves for constrained bandit and MDP are almost overlapped. Therefore, for a fixed budget b, the $\lambda$ for constrained bandit is smaller than constrained MDP and the reward for constrained MDP is larger than that of constrained bandit. The Table~\ref{table:sim_result} shows the performances of different approaches under a fixed budget $b=2.9295$ on the synthetic dataset. The reward of constrained bandit has a 13.31\% increase over the random approach.  The proposed constrained MDP has an 16.11\% improvement over the random approach. 

\begin{table}[H]
\caption{Performances of different approaches under a fixed budget on the synthetic dataset.}
\begin{tabular}{|l |l |l|}
 \hline
Algorithm& cost &reward\\
\hline
\hline
 Random   &  2.9295    & 3.8836 \\
 Constrained contextual bandit&   2.9295  & 4.4006(\textbf{+13.31\%})\\
 Constrained MDP & 2.9295 & 4.5093(\textbf{+16.11\%})\\
 \hline
\end{tabular}
\label{table:sim_result}
\end{table}

\subsection{Real-world Dataset}

\noindent\textbf{Dataset Description} The Dataset comes from Alipay which is one of China's largest payment platforms owned by Ant Financial. During one of market campaigning activities, Alipay repetitively sends red envelopes to its users for a couple of days. The purpose is to improve users' engagement and activeness.  This real world data is used to testify our algorithm. In the dataset, each record contains user features, the amount of red envelope. We take the amount of red envelope as action and whether the user is active in the next day as reward. The amount of red envelop is also the cost incurred by that action. 
This application obeys the three assumptions. Especifically, for Assumption~\ref{monotonicity}, the more money in the red envelope, the higher probability users will be active in the next day.

\noindent\textbf{Results} 
Figure~\ref{action_reward} shows the accuracy of learning the model dynamics, which include the reward model and cost model. As shown, the predicting errors of reward and cost model are relevantly very small in our case. Therefore, the error introduced by model-based planning is small enough which won't deteriorate the performance of model-based acceleration algorithm too much.

\begin{figure}
\centering
	\begin{subfigure}{.4\textwidth}
		\centering
		\includegraphics[width=5cm]{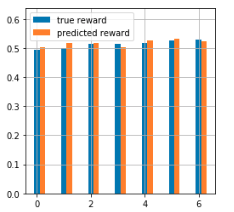}
		\caption{Prediction accuracy of reward}
	\end{subfigure}
	\begin{subfigure}{.4\textwidth}
		\centering
		\includegraphics[width=5cm]{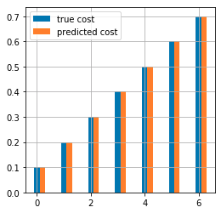}
		\caption{Prediction accuracy of cost}
	\end{subfigure}
	\caption{The prediction accuracy for the cost model. Average ground-truth and predicted cost over the holdout validation real dataset are shown.}
	\label{action_reward}
\end{figure}

Figure~\ref{lambda_reward} and Figure \ref{lambda_cost} describes relationships between reward(cost) and lambda. 
The reward decreases as $\lambda$ increases.
We can see cost decreases when $\lambda$ increases which means the Theorem~\ref{monotonic budget} hold for this real dataset.

\begin{figure}[htb]
\centering
\begin{subfigure}[t]{.4\textwidth}
\includegraphics[width=6cm]{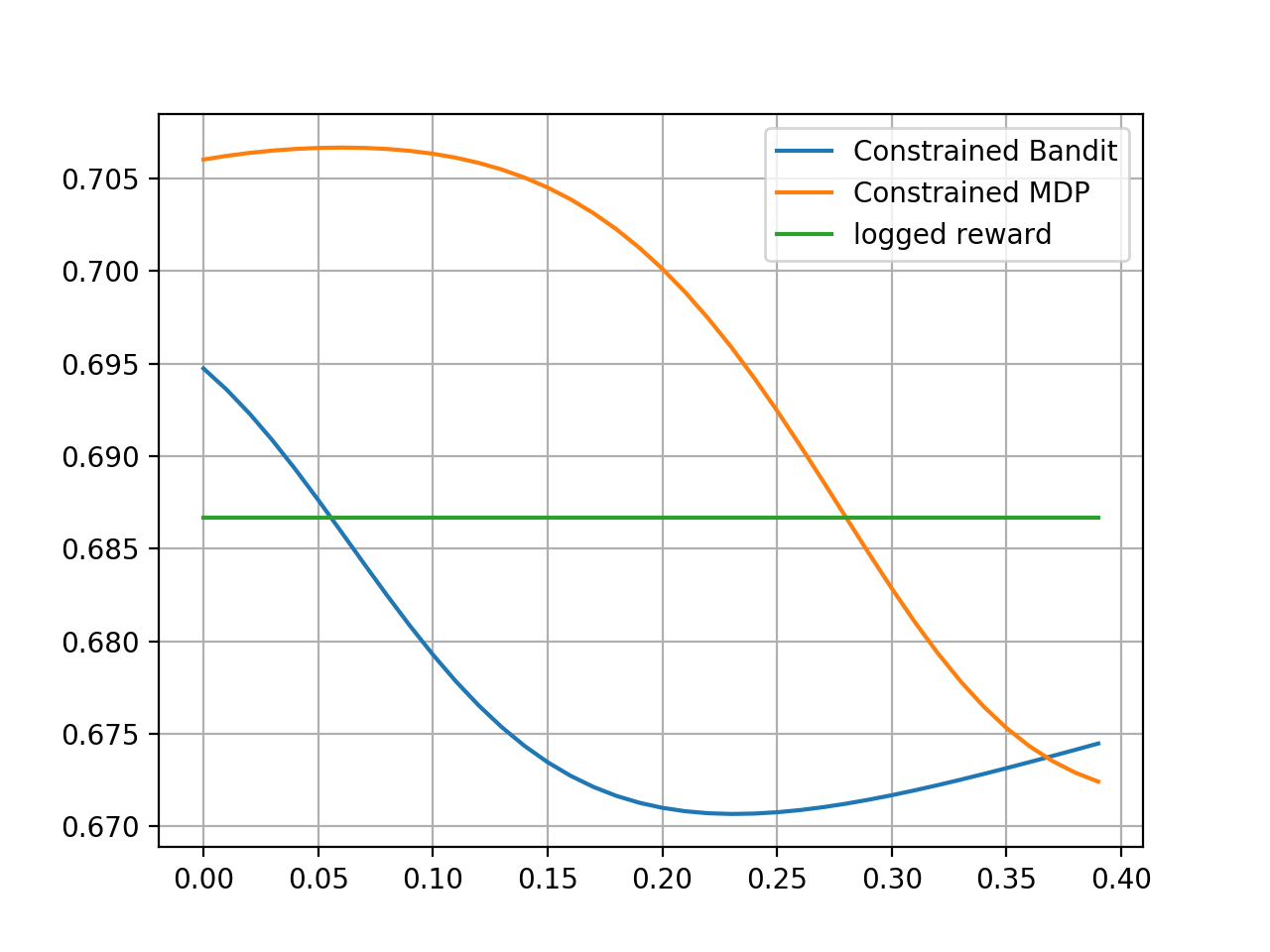}
\caption{Reward curves of constrained MDP approach regarding to $\lambda$. Rewards are computed over the holdout validation dataset using the models learned under different $\lambda$.}
\label{lambda_reward}
\end{subfigure}

\begin{subfigure}[t]{.4\textwidth}
\includegraphics[width=6cm]{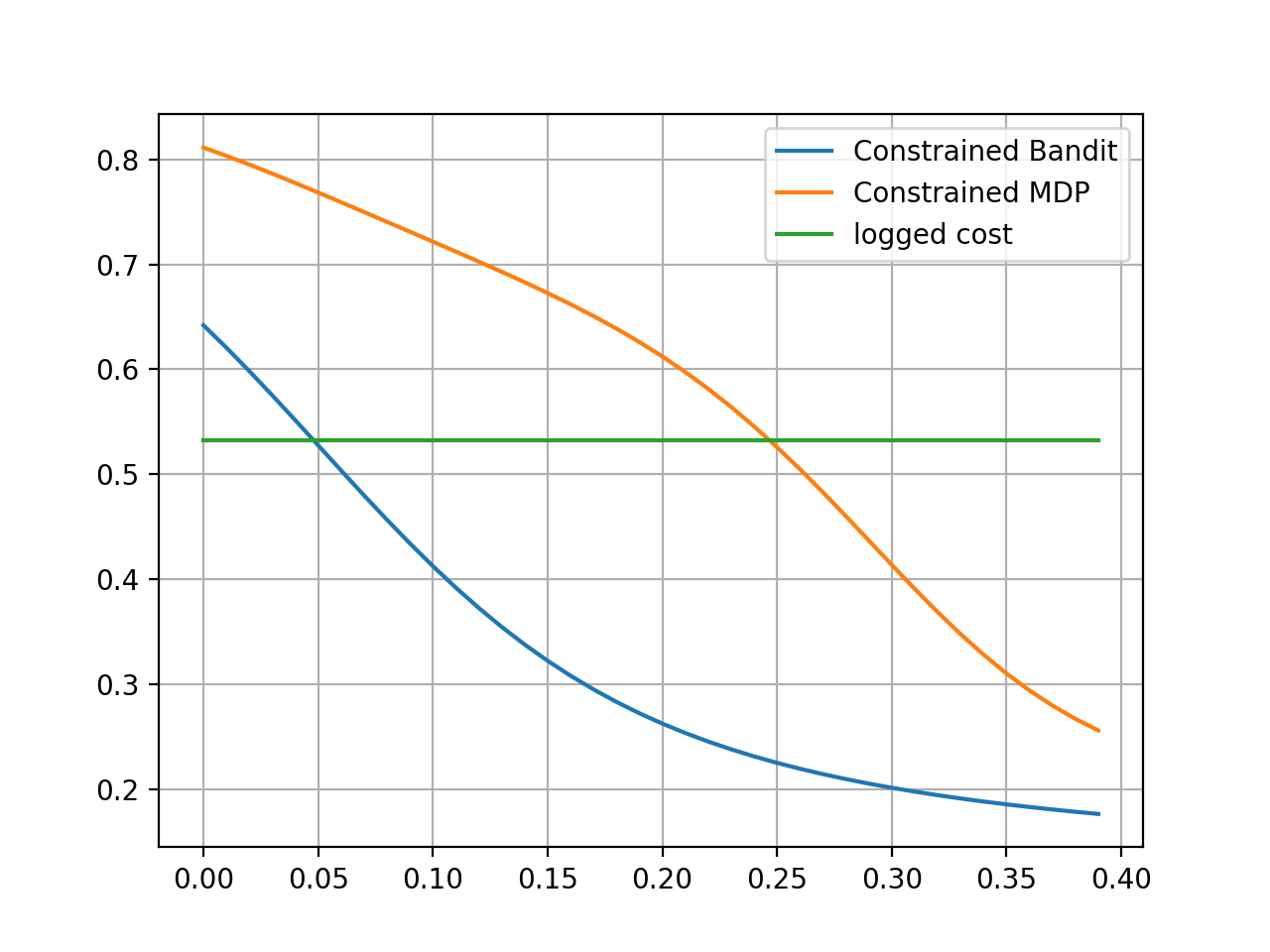}
\caption{Cost curve of constrained MDP approach regarding to $\lambda$. Costs are computed over the holdout validation dataset using the models learned under different $\lambda$.}
\label{lambda_cost}
\end{subfigure}

\caption{The performances of constrained MDP approach regarding to $\lambda$.}
\end{figure}

The Table~\ref{table:real_result} shows the performances of different approaches under a fixed budget $b=0.532$ on the real dataset. The reward of constrained bandit has a 8.97\% increase over the random approach.  The proposed constrained MDP outperforms alternatives over a notable margin, 13.61\% improvement over the random approach.  From another point of view, we evaluate the cost by fixing the campaigning reward. By setting the target reward to $0.687$, constrained MDP can achieve the goal with a cost 9.40\% lower than the random approach which constrained contextual bandit needs more budget.

\begin{table}[h!]
\caption{Performances of different approaches under a fixed budget on the real-world dataset.}
\begin{tabular}{|l |l |l|}
 \hline
Algorithm& cost &reward\\
\hline
\hline
 Random   &  0.532    & 0.687  \\
 \hline
 Fixed budget & &  \\
 \hline
 Constrained contextual bandit &   0.532  & 0.688(\textbf{+0.145\%})\\
 Constrained MDP & 0.532 & 0.693(\textbf{+0.873\%})\\
 \hline
  Fixed reward  &  & \\
  \hline
 Constrained contextual bandit &   0.503 (\textbf{-5.46\%}) & 0.687\\
 Constrained MDP& 0.482 (\textbf{-9.40\%}) & 0.687\\
 \hline
\end{tabular}
\label{table:real_result}
\end{table}

\section{Conclusion}
This paper presents an efficient solution framework for sequential incentives allocation with the budget constraint which employs bisection search and model-based planning to solve the CMDP problem with logged counterfactual data. Empirical results on synthetic and real industrial data show its superior performances compared to alternatives.
For future work, we plan to explore ways to jointly optimize the primary variable $\pi$ and dual variable $\lambda$ as in Equation~\ref{dual problem}, which can converge faster to optimal solutions without model-based planning.
Another interesting direction is to use model-based approach to improve data-efficiency in reinforcement learning as collecting data from real systems is often costly or restricted due to practical concerns.

%
\bibliographystyle{ACM-Reference-Format}
\bibliography{sample-base}


\begin{thebibliography}{30}


\ifx \showCODEN    \undefined \def \showCODEN     #1{\unskip}     \fi
\ifx \showDOI      \undefined \def \showDOI       #1{#1}\fi
\ifx \showISBNx    \undefined \def \showISBNx     #1{\unskip}     \fi
\ifx \showISBNxiii \undefined \def \showISBNxiii  #1{\unskip}     \fi
\ifx \showISSN     \undefined \def \showISSN      #1{\unskip}     \fi
\ifx \showLCCN     \undefined \def \showLCCN      #1{\unskip}     \fi
\ifx \shownote     \undefined \def \shownote      #1{#1}          \fi
\ifx \showarticletitle \undefined \def \showarticletitle #1{#1}   \fi
\ifx \showURL      \undefined \def \showURL       {\relax}        \fi
\providecommand\bibfield[2]{#2}
\providecommand\bibinfo[2]{#2}
\providecommand\natexlab[1]{#1}
\providecommand\showeprint[2][]{arXiv:#2}

\bibitem[Achiam et~al\mbox{.}(2017)]%
        {achiam2017constrained}
\bibfield{author}{\bibinfo{person}{Joshua Achiam}, \bibinfo{person}{David
  Held}, \bibinfo{person}{Aviv Tamar}, {and} \bibinfo{person}{Pieter Abbeel}.}
  \bibinfo{year}{2017}\natexlab{}.
\newblock \showarticletitle{Constrained policy optimization}. In
  \bibinfo{booktitle}{\emph{Proceedings of the 34th International Conference on
  Machine Learning-Volume 70}}. JMLR. org, \bibinfo{pages}{22--31}.
\newblock


\bibitem[Agarwal et~al\mbox{.}(2014)]%
        {agarwal2014taming}
\bibfield{author}{\bibinfo{person}{Alekh Agarwal}, \bibinfo{person}{Daniel
  Hsu}, \bibinfo{person}{Satyen Kale}, \bibinfo{person}{John Langford},
  \bibinfo{person}{Lihong Li}, {and} \bibinfo{person}{Robert Schapire}.}
  \bibinfo{year}{2014}\natexlab{}.
\newblock \showarticletitle{Taming the monster: A fast and simple algorithm for
  contextual bandits}. In \bibinfo{booktitle}{\emph{International Conference on
  Machine Learning}}. \bibinfo{pages}{1638--1646}.
\newblock


\bibitem[Altman(1999)]%
        {altman1999constrained}
\bibfield{author}{\bibinfo{person}{Eitan Altman}.}
  \bibinfo{year}{1999}\natexlab{}.
\newblock \bibinfo{booktitle}{\emph{Constrained Markov decision processes}}.
  Vol.~\bibinfo{volume}{7}.
\newblock \bibinfo{publisher}{CRC Press}.
\newblock


\bibitem[Boyd and Vandenberghe(2004)]%
        {boyd2004convex}
\bibfield{author}{\bibinfo{person}{Stephen Boyd} {and} \bibinfo{person}{Lieven
  Vandenberghe}.} \bibinfo{year}{2004}\natexlab{}.
\newblock \bibinfo{booktitle}{\emph{Convex optimization}}.
\newblock \bibinfo{publisher}{Cambridge university press}.
\newblock


\bibitem[Cheng et~al\mbox{.}(2016)]%
        {cheng2016wide}
\bibfield{author}{\bibinfo{person}{Heng-Tze Cheng}, \bibinfo{person}{Levent
  Koc}, \bibinfo{person}{Jeremiah Harmsen}, \bibinfo{person}{Tal Shaked},
  \bibinfo{person}{Tushar Chandra}, \bibinfo{person}{Hrishi Aradhye},
  \bibinfo{person}{Glen Anderson}, \bibinfo{person}{Greg Corrado},
  \bibinfo{person}{Wei Chai}, \bibinfo{person}{Mustafa Ispir}, {et~al\mbox{.}}}
  \bibinfo{year}{2016}\natexlab{}.
\newblock \showarticletitle{Wide \& deep learning for recommender systems}. In
  \bibinfo{booktitle}{\emph{Proceedings of the 1st workshop on deep learning
  for recommender systems}}. ACM, \bibinfo{pages}{7--10}.
\newblock


\bibitem[Deisenroth and Rasmussen(2011)]%
        {deisenroth2011pilco}
\bibfield{author}{\bibinfo{person}{Marc Deisenroth} {and}
  \bibinfo{person}{Carl~E Rasmussen}.} \bibinfo{year}{2011}\natexlab{}.
\newblock \showarticletitle{PILCO: A model-based and data-efficient approach to
  policy search}. In \bibinfo{booktitle}{\emph{Proceedings of the 28th
  International Conference on machine learning (ICML-11)}}.
  \bibinfo{pages}{465--472}.
\newblock


\bibitem[Dud{\'\i}k et~al\mbox{.}(2014)]%
        {dudik2014doubly}
\bibfield{author}{\bibinfo{person}{Miroslav Dud{\'\i}k},
  \bibinfo{person}{Dumitru Erhan}, \bibinfo{person}{John Langford},
  \bibinfo{person}{Lihong Li}, {et~al\mbox{.}}}
  \bibinfo{year}{2014}\natexlab{}.
\newblock \showarticletitle{Doubly robust policy evaluation and optimization}.
\newblock \bibinfo{journal}{\emph{Statist. Sci.}} \bibinfo{volume}{29},
  \bibinfo{number}{4} (\bibinfo{year}{2014}), \bibinfo{pages}{485--511}.
\newblock


\bibitem[Farajtabar et~al\mbox{.}(2018)]%
        {farajtabar2018more}
\bibfield{author}{\bibinfo{person}{Mehrdad Farajtabar}, \bibinfo{person}{Yinlam
  Chow}, {and} \bibinfo{person}{Mohammad Ghavamzadeh}.}
  \bibinfo{year}{2018}\natexlab{}.
\newblock \showarticletitle{More Robust Doubly Robust Off-policy Evaluation}.
  In \bibinfo{booktitle}{\emph{International Conference on Machine Learning}}.
  \bibinfo{pages}{1446--1455}.
\newblock


\bibitem[Geibel and Wysotzki(2005)]%
        {geibel2005risk}
\bibfield{author}{\bibinfo{person}{Peter Geibel} {and} \bibinfo{person}{Fritz
  Wysotzki}.} \bibinfo{year}{2005}\natexlab{}.
\newblock \showarticletitle{Risk-sensitive reinforcement learning applied to
  control under constraints}.
\newblock \bibinfo{journal}{\emph{Journal of Artificial Intelligence Research}}
   \bibinfo{volume}{24} (\bibinfo{year}{2005}), \bibinfo{pages}{81--108}.
\newblock


\bibitem[Ha and Schmidhuber(2018)]%
        {ha2018world}
\bibfield{author}{\bibinfo{person}{David Ha} {and} \bibinfo{person}{J{\"u}rgen
  Schmidhuber}.} \bibinfo{year}{2018}\natexlab{}.
\newblock \showarticletitle{World models}.
\newblock \bibinfo{journal}{\emph{arXiv preprint arXiv:1803.10122}}
  (\bibinfo{year}{2018}).
\newblock


\bibitem[Haarnoja et~al\mbox{.}(2017)]%
        {haarnoja2017reinforcement}
\bibfield{author}{\bibinfo{person}{Tuomas Haarnoja}, \bibinfo{person}{Haoran
  Tang}, \bibinfo{person}{Pieter Abbeel}, {and} \bibinfo{person}{Sergey
  Levine}.} \bibinfo{year}{2017}\natexlab{}.
\newblock \showarticletitle{Reinforcement learning with deep energy-based
  policies}. In \bibinfo{booktitle}{\emph{Proceedings of the 34th International
  Conference on Machine Learning-Volume 70}}. JMLR. org,
  \bibinfo{pages}{1352--1361}.
\newblock


\bibitem[Hafner et~al\mbox{.}(2018)]%
        {hafner2018learning}
\bibfield{author}{\bibinfo{person}{Danijar Hafner}, \bibinfo{person}{Timothy
  Lillicrap}, \bibinfo{person}{Ian Fischer}, \bibinfo{person}{Ruben Villegas},
  \bibinfo{person}{David Ha}, \bibinfo{person}{Honglak Lee}, {and}
  \bibinfo{person}{James Davidson}.} \bibinfo{year}{2018}\natexlab{}.
\newblock \showarticletitle{Learning latent dynamics for planning from pixels}.
\newblock \bibinfo{journal}{\emph{arXiv preprint arXiv:1811.04551}}
  (\bibinfo{year}{2018}).
\newblock


\bibitem[He et~al\mbox{.}(2014)]%
        {he2014practical}
\bibfield{author}{\bibinfo{person}{Xinran He}, \bibinfo{person}{Junfeng Pan},
  \bibinfo{person}{Ou Jin}, \bibinfo{person}{Tianbing Xu}, \bibinfo{person}{Bo
  Liu}, \bibinfo{person}{Tao Xu}, \bibinfo{person}{Yanxin Shi},
  \bibinfo{person}{Antoine Atallah}, \bibinfo{person}{Ralf Herbrich},
  \bibinfo{person}{Stuart Bowers}, {et~al\mbox{.}}}
  \bibinfo{year}{2014}\natexlab{}.
\newblock \showarticletitle{Practical lessons from predicting clicks on ads at
  facebook}. In \bibinfo{booktitle}{\emph{Proceedings of the Eighth
  International Workshop on Data Mining for Online Advertising}}. ACM,
  \bibinfo{pages}{1--9}.
\newblock


\bibitem[Jiang and Li(2016)]%
        {jiang2016doubly}
\bibfield{author}{\bibinfo{person}{Nan Jiang} {and} \bibinfo{person}{Lihong
  Li}.} \bibinfo{year}{2016}\natexlab{}.
\newblock \showarticletitle{Doubly Robust Off-policy Value Evaluation for
  Reinforcement Learning}. In \bibinfo{booktitle}{\emph{International
  Conference on Machine Learning}}. \bibinfo{pages}{652--661}.
\newblock


\bibitem[Joachims et~al\mbox{.}(2018)]%
        {joachims2018deep}
\bibfield{author}{\bibinfo{person}{Thorsten Joachims}, \bibinfo{person}{Adith
  Swaminathan}, {and} \bibinfo{person}{Maarten de Rijke}.}
  \bibinfo{year}{2018}\natexlab{}.
\newblock \showarticletitle{Deep learning with logged bandit feedback}.
\newblock  (\bibinfo{year}{2018}).
\newblock


\bibitem[Lopez et~al\mbox{.}(2019)]%
        {lopez2019cost}
\bibfield{author}{\bibinfo{person}{Romain Lopez}, \bibinfo{person}{Chenchen
  Li}, \bibinfo{person}{Xiang Yan}, \bibinfo{person}{Junwu Xiong},
  \bibinfo{person}{Michael~I Jordan}, \bibinfo{person}{Yuan Qi}, {and}
  \bibinfo{person}{Le Song}.} \bibinfo{year}{2019}\natexlab{}.
\newblock \showarticletitle{Cost-Effective Incentive Allocation via Structured
  Counterfactual Inference}.
\newblock \bibinfo{journal}{\emph{arXiv preprint arXiv:1902.02495}}
  (\bibinfo{year}{2019}).
\newblock


\bibitem[McMahan et~al\mbox{.}(2013)]%
        {mcmahan2013ad}
\bibfield{author}{\bibinfo{person}{H~Brendan McMahan}, \bibinfo{person}{Gary
  Holt}, \bibinfo{person}{David Sculley}, \bibinfo{person}{Michael Young},
  \bibinfo{person}{Dietmar Ebner}, \bibinfo{person}{Julian Grady},
  \bibinfo{person}{Lan Nie}, \bibinfo{person}{Todd Phillips},
  \bibinfo{person}{Eugene Davydov}, \bibinfo{person}{Daniel Golovin},
  {et~al\mbox{.}}} \bibinfo{year}{2013}\natexlab{}.
\newblock \showarticletitle{Ad click prediction: a view from the trenches}. In
  \bibinfo{booktitle}{\emph{Proceedings of the 19th ACM SIGKDD international
  conference on Knowledge discovery and data mining}}. ACM,
  \bibinfo{pages}{1222--1230}.
\newblock


\bibitem[Mnih et~al\mbox{.}(2015)]%
        {mnih2015human}
\bibfield{author}{\bibinfo{person}{Volodymyr Mnih}, \bibinfo{person}{Koray
  Kavukcuoglu}, \bibinfo{person}{David Silver}, \bibinfo{person}{Andrei~A
  Rusu}, \bibinfo{person}{Joel Veness}, \bibinfo{person}{Marc~G Bellemare},
  \bibinfo{person}{Alex Graves}, \bibinfo{person}{Martin Riedmiller},
  \bibinfo{person}{Andreas~K Fidjeland}, \bibinfo{person}{Georg Ostrovski},
  {et~al\mbox{.}}} \bibinfo{year}{2015}\natexlab{}.
\newblock \showarticletitle{Human-level control through deep reinforcement
  learning}.
\newblock \bibinfo{journal}{\emph{Nature}} \bibinfo{volume}{518},
  \bibinfo{number}{7540} (\bibinfo{year}{2015}), \bibinfo{pages}{529}.
\newblock


\bibitem[Nachum et~al\mbox{.}(2017)]%
        {nachum2017bridging}
\bibfield{author}{\bibinfo{person}{Ofir Nachum}, \bibinfo{person}{Mohammad
  Norouzi}, \bibinfo{person}{Kelvin Xu}, {and} \bibinfo{person}{Dale
  Schuurmans}.} \bibinfo{year}{2017}\natexlab{}.
\newblock \showarticletitle{Bridging the gap between value and policy based
  reinforcement learning}. In \bibinfo{booktitle}{\emph{Advances in Neural
  Information Processing Systems}}. \bibinfo{pages}{2775--2785}.
\newblock


\bibitem[Nagabandi et~al\mbox{.}(2018)]%
        {nagabandi2018neural}
\bibfield{author}{\bibinfo{person}{Anusha Nagabandi}, \bibinfo{person}{Gregory
  Kahn}, \bibinfo{person}{Ronald~S Fearing}, {and} \bibinfo{person}{Sergey
  Levine}.} \bibinfo{year}{2018}\natexlab{}.
\newblock \showarticletitle{Neural network dynamics for model-based deep
  reinforcement learning with model-free fine-tuning}. In
  \bibinfo{booktitle}{\emph{2018 IEEE International Conference on Robotics and
  Automation (ICRA)}}. IEEE, \bibinfo{pages}{7559--7566}.
\newblock


\bibitem[Precup(2000)]%
        {precup2000eligibility}
\bibfield{author}{\bibinfo{person}{Doina Precup}.}
  \bibinfo{year}{2000}\natexlab{}.
\newblock \showarticletitle{Eligibility traces for off-policy policy
  evaluation}.
\newblock \bibinfo{journal}{\emph{Computer Science Department Faculty
  Publication Series}} (\bibinfo{year}{2000}), \bibinfo{pages}{80}.
\newblock


\bibitem[Schulman et~al\mbox{.}(2017)]%
        {schulman2017equivalence}
\bibfield{author}{\bibinfo{person}{John Schulman}, \bibinfo{person}{Xi Chen},
  {and} \bibinfo{person}{Pieter Abbeel}.} \bibinfo{year}{2017}\natexlab{}.
\newblock \showarticletitle{Equivalence between policy gradients and soft
  q-learning}.
\newblock \bibinfo{journal}{\emph{arXiv preprint arXiv:1704.06440}}
  (\bibinfo{year}{2017}).
\newblock


\bibitem[Schulman et~al\mbox{.}(2015)]%
        {schulman2015trust}
\bibfield{author}{\bibinfo{person}{John Schulman}, \bibinfo{person}{Sergey
  Levine}, \bibinfo{person}{Pieter Abbeel}, \bibinfo{person}{Michael Jordan},
  {and} \bibinfo{person}{Philipp Moritz}.} \bibinfo{year}{2015}\natexlab{}.
\newblock \showarticletitle{Trust region policy optimization}. In
  \bibinfo{booktitle}{\emph{International Conference on Machine Learning}}.
  \bibinfo{pages}{1889--1897}.
\newblock


\bibitem[Swaminathan and Joachims(2015a)]%
        {swaminathan2015batch}
\bibfield{author}{\bibinfo{person}{Adith Swaminathan} {and}
  \bibinfo{person}{Thorsten Joachims}.} \bibinfo{year}{2015}\natexlab{a}.
\newblock \showarticletitle{Batch learning from logged bandit feedback through
  counterfactual risk minimization.}
\newblock \bibinfo{journal}{\emph{Journal of Machine Learning Research}}
  \bibinfo{volume}{16}, \bibinfo{number}{1} (\bibinfo{year}{2015}),
  \bibinfo{pages}{1731--1755}.
\newblock


\bibitem[Swaminathan and Joachims(2015b)]%
        {swaminathan2015counterfactual}
\bibfield{author}{\bibinfo{person}{Adith Swaminathan} {and}
  \bibinfo{person}{Thorsten Joachims}.} \bibinfo{year}{2015}\natexlab{b}.
\newblock \showarticletitle{Counterfactual risk minimization: Learning from
  logged bandit feedback}. In \bibinfo{booktitle}{\emph{International
  Conference on Machine Learning}}. \bibinfo{pages}{814--823}.
\newblock


\bibitem[Swaminathan and Joachims(2015c)]%
        {swaminathan2015self}
\bibfield{author}{\bibinfo{person}{Adith Swaminathan} {and}
  \bibinfo{person}{Thorsten Joachims}.} \bibinfo{year}{2015}\natexlab{c}.
\newblock \showarticletitle{The self-normalized estimator for counterfactual
  learning}. In \bibinfo{booktitle}{\emph{advances in neural information
  processing systems}}. \bibinfo{pages}{3231--3239}.
\newblock


\bibitem[Thomas and Brunskill(2016)]%
        {thomas2016data}
\bibfield{author}{\bibinfo{person}{Philip Thomas} {and} \bibinfo{person}{Emma
  Brunskill}.} \bibinfo{year}{2016}\natexlab{}.
\newblock \showarticletitle{Data-efficient off-policy policy evaluation for
  reinforcement learning}. In \bibinfo{booktitle}{\emph{International
  Conference on Machine Learning}}. \bibinfo{pages}{2139--2148}.
\newblock


\bibitem[Wang et~al\mbox{.}(2015)]%
        {wang2015dueling}
\bibfield{author}{\bibinfo{person}{Ziyu Wang}, \bibinfo{person}{Tom Schaul},
  \bibinfo{person}{Matteo Hessel}, \bibinfo{person}{Hado Van~Hasselt},
  \bibinfo{person}{Marc Lanctot}, {and} \bibinfo{person}{Nando De~Freitas}.}
  \bibinfo{year}{2015}\natexlab{}.
\newblock \showarticletitle{Dueling network architectures for deep
  reinforcement learning}.
\newblock \bibinfo{journal}{\emph{arXiv preprint arXiv:1511.06581}}
  (\bibinfo{year}{2015}).
\newblock


\bibitem[Watter et~al\mbox{.}(2015)]%
        {watter2015embed}
\bibfield{author}{\bibinfo{person}{Manuel Watter}, \bibinfo{person}{Jost
  Springenberg}, \bibinfo{person}{Joschka Boedecker}, {and}
  \bibinfo{person}{Martin Riedmiller}.} \bibinfo{year}{2015}\natexlab{}.
\newblock \showarticletitle{Embed to control: A locally linear latent dynamics
  model for control from raw images}. In \bibinfo{booktitle}{\emph{Advances in
  neural information processing systems}}. \bibinfo{pages}{2746--2754}.
\newblock


\bibitem[Wu et~al\mbox{.}(2018)]%
        {wu2018budget}
\bibfield{author}{\bibinfo{person}{Di Wu}, \bibinfo{person}{Xiujun Chen},
  \bibinfo{person}{Xun Yang}, \bibinfo{person}{Hao Wang}, \bibinfo{person}{Qing
  Tan}, \bibinfo{person}{Xiaoxun Zhang}, \bibinfo{person}{Jian Xu}, {and}
  \bibinfo{person}{Kun Gai}.} \bibinfo{year}{2018}\natexlab{}.
\newblock \showarticletitle{Budget constrained bidding by model-free
  reinforcement learning in display advertising}. In
  \bibinfo{booktitle}{\emph{Proceedings of the 27th ACM International
  Conference on Information and Knowledge Management}}. ACM,
  \bibinfo{pages}{1443--1451}.
\newblock


\end{thebibliography}

%
\appendix

\section{Counterfactual Policy Evaluation (CPE)} \label{sec:CPE}
The details of Counterfactual Policy Evaluation is given in Algorithm~\ref{CPE}.
\begin{algorithm}[h]
\caption{Counterfactual Policy Evaluation}
\label{CPE}
\begin{algorithmic}[1]
\Require Policy $\pi$ to evaluate, logged evaluation data $D_{val}$. 
\State  Initial empty samples set $\Phi$.
\For {$(s, a, p, r, c) \in D_{val}$  }
\State $p_{new}$ = $\pi (s, a)$
\State Add transformed samples (($s,a,p,c,p_{new}$)) into samples set $\Phi$.
\EndFor
\State Evaluated cost $C$= Doubly Robust($\Phi$) \Comment{evaluate cost use doubly robust}.
\State \Return Evaluated cost $C$.

\end{algorithmic}
\end{algorithm}

\section{Upper bound for $\lambda$}\label{proof lambda upper bound}

When using bisection search,  we can use assumption~\ref{monotonicity} to calculate upper bound for dual variable $\lambda$.

\begin{asu}\label{monotonicity}
Monotonicity between reward and cost at every state,  when $c(s, a_0) > c(s, a_1)$, then $r(s, a_0) > r(s, a_1)$ 
\end{asu}
This assumption presents the simple idea that the larger reward you get, the higher cost you pay. This idea holds in most promotion and advertising scenarios.

The details of deriving upper bound are given as follows.

The basic idea to find an upper bound for $\lambda$ is the optimal policy for $\lambda$ is $\pi_l$ in definition~\ref{lowbound}. Assume the reward and cost is in ascending order for action $a_0, a_1, ...$,  define reward and action for action $a_i$ is $r_i, c_i$,  so $r_i < r_j, \forall i < j$ and $c_i < c_j, \forall i < j$, define upper bound of $\lambda$ as $\lambda_{u}$, for $\pi_l$, we need $\argmax_i r_i - \lambda_u * c_i$ is 0. which means
\begin{equation}
r_0-\lambda_u * c_0 < r_i - \lambda_u * c_i, \quad\forall i=1,...,n
\end{equation}
so
\begin{equation}
\frac{r_0-r_i}{c_0 - c_i} < \lambda_u, \quad\forall i=1,...,n
\end{equation}

The details of find upper bound of $\lambda$ step is given in Algorithm~\ref{UpperBoundLambda}
\begin{algorithm}[h]
\caption{Upper Bound for Lambda}
\label{UpperBoundLambda}
\begin{algorithmic}[1]
\Procedure{LambdaUpperBound}{$D_{val}$} 
\LeftComment{calculate mean reward and cost for each action}
\State action\_reward\_sum = Counter()
\State action\_cost\_sum = Counter()
\State action\_count = Counter()
\For {$(s,a,p,r,c) \in D_{val}$}
\State action\_reward\_sum[a] += r
\State action\_cost\_sum[a] += c
\State action\_count[a] += 1
\EndFor
\State action\_reward = Dict()
\State action\_cost = Dict()
\State action\_list = List()
\For {a, n $\in$ action\_count}
\State action\_list.append(a)
\State action\_reward[a] = action\_reward\_sum[a]/n
\State action\_cost[a] = action\_cost\_sum[a]/n
\EndFor
\LeftComment{search upper bound for $\lambda$}
\State $\lambda$ = 0
\State $r_0$ = action\_reward[0]
\State $c_0$ = action\_cost[0]
\For{i $\in$ range(1, len(action\_list)) }
	\State a = action\_list[i]
	\State $r_i$ = action\_reward[a]
	\State $c_i$ = action\_cost[a]
	\State value = $\frac{r_0-r_i}{c_0 - c_i}$ + 1.
	\If{value < $\lambda$}
	\State $\lambda$ = value
	\EndIf
\EndFor
\EndProcedure
\end{algorithmic}
\end{algorithm}

\end{document}